\theoremstyle{thmstyleone}%
\newtheorem{theorem}{Theorem}
\newtheorem{proposition}[theorem]{Proposition}%
\theoremstyle{thmstyletwo}%
\theoremstyle{thmstylethree}%
\newtheorem{definition}{Definition}%
\newcommand{\TheName}[0]{\textbf{Knoop}}
\newcommand{\Kappa}[0]{\mathbf{K}}
\begin{document}

\title[\TheName: Practical Enhancement of Knockoff with Over-Parameterization for Variable Selection]{\TheName: Practical Enhancement of Knockoff with Over-Parameterization for Variable Selection\vspace{-5mm}}

\author[1]{\fnm{Xiaochen} \sur{Zhang}}\email{202012079@mail.sdu.edu.cn}
\author[2]{\fnm{Yunfeng} \sur{Cai}}\email{yunfengcai09@gmail.com\vspace{-5mm}}
\author*[3]{\fnm{Haoyi} \sur{Xiong}}\email{haoyi.xiong.fr@ieee.org}


\affil[1]{\orgdiv{Research Center for Mathematics and Interdisciplinary Sciences}, \orgname{Shandong University}, \postcode{266237}, \orgaddress{\city{Qingdao}, \country{China}}}

\affil[2]{\orgname{Yanqi Lake Beijing Institute of Mathematical Sciences And Applications}, \orgaddress{ \city{Huairou District}, \postcode{100084}, \state{Beijing}, \country{China}}}

\affil[3]{\orgname{Microsoft}, \city{Haidian District}, \postcode{100085}, \state{Beijing}, \country{China}\vspace{-8mm}}

\abstract{
Variable selection plays a crucial role in enhancing modeling effectiveness across diverse fields, addressing the challenges posed by high-dimensional datasets of correlated variables. This work introduces a novel approach namely \emph{\underline{Kn}ockoff with \underline{o}ver-\underline{p}arameterization} (\TheName{}) to enhance Knockoff filters for variable selection. Specifically, \TheName{} first generates multiple knockoff variables for each original variable and integrates them with the original variables into an over-parameterized Ridgeless regression model. For each original variable, \TheName{} evaluates the coefficient distribution of its knockoffs and compares these with the original coefficients to conduct an anomaly-based significance test, ensuring robust variable selection. Extensive experiments demonstrate superior performance compared to existing methods in both simulation and real-world datasets. \TheName{} achieves a notably higher Area under the Curve (AUC) of the Receiver Operating Characteristic (ROC) Curve for effectively identifying relevant variables against the ground truth by controlled simulations, while showcasing enhanced predictive accuracy across diverse regression and classification tasks. The analytical results further backup our observations.

\vspace{-5mm}}
\maketitle

\vspace{-1mm}\section{Introduction\vspace{-1mm}}
Variable selection, especially supervised variable/feature selection, plays a crucial role in enhancing modeling effectiveness across diverse fields, such as genetics and biological science \cite{he2022ghostknockoff}, especially when dealing with high-dimensional data characterized by complex correlations and heterogeneity. By forming a parsimonious subset of key variables, this process addresses challenges like the curse of dimensionality and the presence of irrelevant and redundant variables. It thus improves the stability of regression models and the clarity of predictions. Two categories of variable selection strategies, model-based and model-agnostic, are distinguished by their underlying assumptions and selection approaches.
Common variable selection methods include multiple testing and sparse (linear) models, such as least absolute shrinkage and selection operator (Lasso)~\cite{Lasso1, frandi2016fast}, ElasticNet~\cite{elasticNet}, and sparse autoencoder~\cite{atashgahi2022quick}. Multiple testing evaluates variable relevance through investigating statistically conditional independence among variables~\cite{multipleTesting,watson2021testing}, while Lasso applies an $\ell_1$-norm penalty to reduce non-zero coefficients, thereby stabilizing the model and aiding selection by compressing coefficients. In addition, the Dantzig selector~\cite{candes2007dantzig}, named after George Dantzig, operates by minimizing the residual sum of squares subject to an $\ell_1$-norm constraint on the coefficients, which offers advantages in high-dimensional settings. On the other hand, ElasticNet~\cite{elasticNet} combines $\ell_1$-norm and $\ell_2$-norm penalties to overcome the drawbacks of Lasso by introducing a convex combination of both penalties. This hybrid approach retains the variable selection properties of Lasso while encouraging grouping effects and handling multicollinearity more effectively. However, multiple testing can oversimplify complex datasets, and sparse linear models might introduce bias in the presence of highly correlated variables. These limitations motivate our research on variable selection techniques.

More recently, Knockoff has been proposed as a statistical variable selector aiming at controlling the False Discovery Rate (FDR) in variable selection \cite{fixedKnockoff, modelXKnockoff}. This approach involves generating synthetic ``knockoff'' versions of variables, which are then compared with the original variables to identify significant differences. Moreover, Knockoff demonstrates robustness in handling highly correlated variables, a common challenge for sparse linear models. The capability of Knockoff to create synthetic references for comparison helps disentangle the effects of correlated variables more effectively, leading to more stable and unbiased variable selection outcomes~\cite{watson2021testing,yu2020causality}. In addition to its FDR control and robustness, the Knockoff method stands out for its versatility in handling complex models without imposing restrictive assumptions. Advanced Knockoff filters, such as Model-X~\cite{modelXKnockoff}, are designed to work with diverse models, expanding its applicability across various fields and datasets. Such adaptability allows Knockoff to outperform sparse linear models in scenarios where traditional techniques may struggle due to data complexity or model assumptions. Notably, Knockoff has demonstrated superior performance in genomics and high-dimensional data settings~\cite{he2022ghostknockoff}, showcasing its efficacy in accurately identifying relevant variables amidst a large number of potentially correlated variables.

\begin{figure}
\vspace{-3mm}
    \centering
    \subfloat[Knockoff with Ridge Regression]{\includegraphics[width=0.7\textwidth]{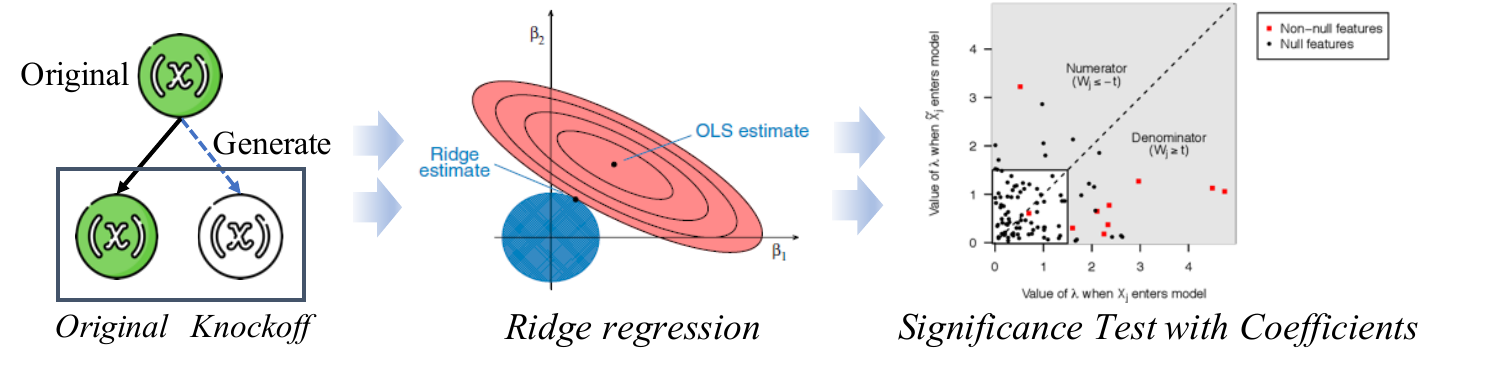}}\vspace{-3mm}\\
    \subfloat[The Proposed Pipeline of \TheName{}]{\includegraphics[width=\textwidth]{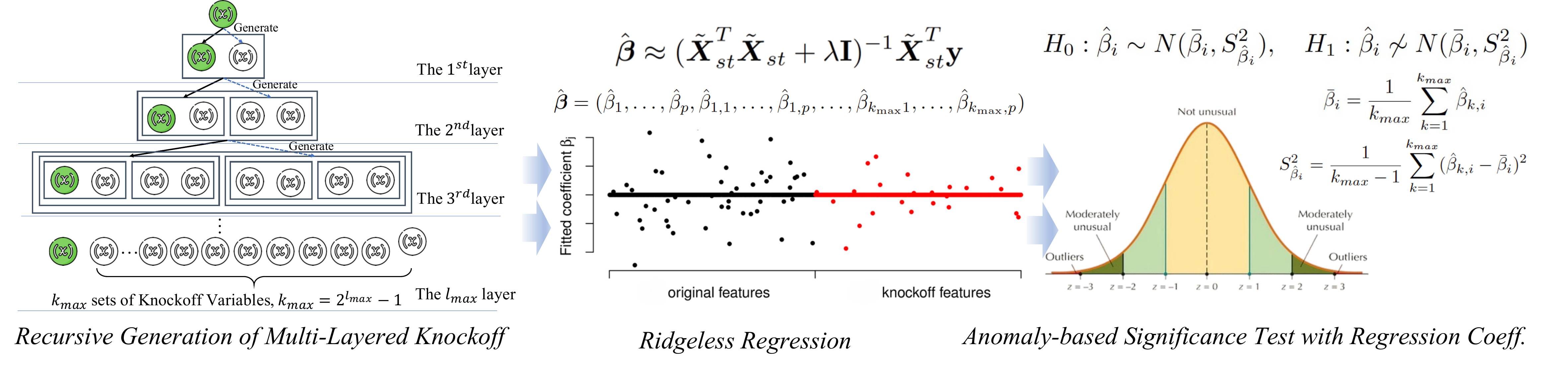}}\vspace{-2mm}
    \caption{A Brief Comparison between Knockoff and the proposed \TheName{} pipeline}
    \label{fig:pipeline-compare}\vspace{-6mm}
\end{figure}

While Knockoff could outperform sparse linear models in handling highly correlated variables, it still suffers performance limitations from the two major perspectives: computational costs and model capacity. Though the computational costs associated with generating knockoff variables might be substantial, this challenge can be mitigated by leveraging the rapid advancements in computational power with advanced computing~\cite{ko2022high}. In terms of model capacity, the Knockoff filter only generates a set of ``knockoff'' variables from the original ones, and fits the data using the two set of variables via linear regression model like Ridge~\cite{hoerl1970Ridge}. Such simple model might be insufficient to capture complex relationships between variables and the response variable. In addition, the regularization made by Ridge would also induce bias into the model estimation. Thus, it is reasonable to doubt whether the performance of Knockoff could be further improved when the regression model fits the data ``perfectly''~\cite{liang2020just,RidgelessTheory5}.

To address above issues, we introduce a novel Knockoff method, namely \emph{Knockoff with Over-Parameterization} (\TheName{}),  that works on top of over-parameterized linear models (Fig.~\ref{fig:pipeline-compare} presents a simple comparison on the pipeline). The proposed method first leverages a \emph{recursive approach to generating multi-layered knockoff variables} from the original ones, where the knockoff variables are expected to be independent of the response variable yet exchangeable with the original ones. Then, \TheName{} incorporates both the original variables and their knockoff copies into a Ridgeless regression model~\cite{liang2020just} subject to the response variable, refining the coefficient estimates with enhanced fitness. Further, based on the Ridgeless estimates of coefficients for all variables, \TheName{} proposes a procedure of \emph{anomaly-based significant test} to compare the coefficients of original variable against knockoff counterparts for FDR control and rank the original variables by their significance for variable selection. Extensive experiments have been done to demonstrate superior performance compared to existing methods in both simulation and real-world datasets, for either relevant variable identification and feature selection for predictive modeling tasks.

\vspace{-1mm}\section{Backgrounds and Preliminaries}\vspace{-1mm}

\vspace{-1mm}\subsection{Variable Selection via Linear Models\vspace{-1mm}}
Consider a dataset with \(n\) independent and identically distributed observations of $p$ variables, which can be collectively represented in matrix form as \({X} \in \mathbb{R}^{n \times p}\) for the variable matrix and \(\mathbf{y} \in \mathbb{R}^n\) for the response vector. An unknown projection vector $\boldsymbol{\beta}=(\beta_1, \ldots, \beta_p)\in \mathbb{R}^p$ is assumed to represent a linear relationship as follow
\[
\mathbf{y} = {X} \boldsymbol{\beta} + \boldsymbol{\varepsilon}\ \text{and}\  \boldsymbol{\varepsilon}\sim \mathcal{N}(0, \sigma^2 I)\ .
\]
The objective of variable selection is to infer the non-zero coefficients in $\boldsymbol{\beta}$, so as to understand the relationship between the variables and their responses.

\vspace{-1mm}
\subsection{Knockoff Methods}\vspace{-1mm}
Generally, two categories of Knockoffs --- fixed-X and model-X methods have been studied~\cite{fixedKnockoff,modelXKnockoff}. While the fixed-X Knockoff filter generates synthetic variables with marginal correlations, independent of other variables, our work focuses on the model-X Knockoff filter generating knockoffs by modeling the joint distribution of all variables, thus managing more complex dependencies.  
The model-X knockoff variables \(\widetilde{X}\) replicate the distribution of the original variables \(X\)~\cite{modelXKnockoff}. These variables meet two crucial criteria: \emph{Exchangeability}: \((X, \widetilde{X})_{\mathrm{swap}(S)} \overset{d}{=} (X, \widetilde{X})\), achieved by switching each \(X_j\) with \(\widetilde{X}_j\) in any subset \(S\) of indices and \emph{Independence}: Ensuring that \(\widetilde{X}\) is independent of the response vector \(\mathbf{y}\), conditional on \(X\). In the model-X Knockoff filter, the importance of each variable \(X_j\) is assessed through a statistic \(W_j = w_j([X, \widetilde{X}], \mathbf{y})\), where \(w_j\) is a function indicating the relevancy of variables. Importantly, the sign of \(w_j\) is reversed when \(X_j\) and \(\widetilde{X}_j\) are swapped within \(S\).  
\begin{align}
w_j([X, \widetilde{X}]_{\mathrm{swap}(S)}, \mathbf{y}) = 
\begin{cases}
w_j([X, \widetilde{X}], \mathbf{y}), & j \notin S, \\
-w_j([X, \widetilde{X}], \mathbf{y}), & j \in S.
\end{cases}
\end{align}
A threshold \(\tau\) is established based on the condition as follows.
\begin{align}
\tau = \min\left\{ t > 0: \frac{1 + \#\{j : W_j \leq -t\}}{\#\{j : W_j \geq t\}} \leq q \right\},
\end{align}
which guides the selection of variables for which \(W_j \geq \tau\). By adhering to the above criterion, the method ensures that the rate of false discoveries among the selected variables remains within the pre-established FDR level \(q\). To further enhance the stability of model-X Knockoff, the multiple Knockoff filter~\cite{nguyen2020aggregation} has been proposed to run multiple instances of the Knockoff procedure simultaneously and aggregating their results to control the FDR more effectively for variable selection. 

Unlike existing methods, our approach \TheName{} generates multiple sets of knockoff variables to approximate a distribution of coefficients for the knockoff, and uses both original variables and multiple knockoff sets into the regression model, enhancing both scalability and capability. Finally an anomaly-based test is proposed by \TheName{} to compare the coefficient of every single original variable with the distribution of its knockoff counterparts for significance evaluation. Such inclusion creates a larger control group of null variables, maintaining distributional similarity but ensuring independence from the response variable, thus providing a robust baseline for assessing the impact of original variables.

\vspace{-1mm}
\subsection{Ridge Regression and Sparse Linear Models}\vspace{-1mm}
When selecting variables from data, one approach is to assess the significance of the coefficients in regression models,
as follows.  
\begin{itemize}
    \item \emph{Ridge Regression~\cite{hoerl1970Ridge}:} Ridge Regression integrates an \(\ell_2\) penalty into the loss function, effectively shrinking regression coefficients towards zero. This regularization helps mitigate overfitting by penalizing the magnitude of the coefficients. The solution is formulated as:
    \begin{align}
    \hat{\boldsymbol{\beta}}_{r} = \underset{\beta \in \mathbb{R}^{p}}{\text{argmin}} \left\{ \frac{1}{2} \| \mathbf{y} - X\beta \|_2^2 + \lambda_{r} \| \beta \|_2^2 \right\},
    \end{align}
    where \(\lambda_{r}\) is the regularization strength.
    
    \item \emph{Lasso Regression~\cite{Lasso1}:} Lasso promotes sparsity in the coefficient vector through an \(\ell_1\) penalty, setting less important variable coefficients to zero, thus aiding in variable selection. The coefficients are optimized by:
    \begin{align}
    \hat{\boldsymbol{\beta}}_{l} = \underset{\beta \in \mathbb{R}^{p}}{\text{argmin}} \left\{ \frac{1}{2} \| \mathbf{y} - X\beta \|_2^2 + \lambda_{l} \| \beta \|_1 \right\},
    \end{align}
    balancing the model fit and complexity via the tuning parameter \(\lambda_{l}\).
    
    \item \emph{ElasticNet~\cite{elasticNet}:} Elastic Net merges the penalties of Lasso and Ridge Regression to harness the benefits of both sparsity and stability.
\end{itemize}

\vspace{-1mm}
\subsection{Ridgeless Regression and Overparameterization}\vspace{-1mm}
Ridgeless regression extends the ordinary least squares estimator to provide a unique solution in ultra-high dimensional settings. The coefficients, $\hat{\boldsymbol{\beta}}$, can be obtained through a limiting process as $\lambda \to 0$:
\begin{align}
    \hat{\boldsymbol{\beta}} &= \lim_{\lambda \to 0} (X^T X + \lambda I)^{-1} X^T \mathbf{y},
\end{align}
Theoretical research has explored the conditions under which over-parameterized Ridgeless regression models attain advantages of estimation and prediction~\cite{equivalentRidgeless}.

\vspace{-1mm}
\section{Methodology: Over-Parameterized Knockoff}
\label{sec:section3}\vspace{-1mm}
This section presents the overall framework and core algorithms of \TheName{}.

\vspace{-1mm}
\subsection{Overall Framework Design}\vspace{-1mm}
In this section, we describe the comprehensive design of the \TheName{} algorithm (Algorithm~\ref{alg:Knoop}). The algorithm accepts the training set \((X, \mathbf{y})\) as input and outputs the selection result of the variables. The method comprises four principal steps: \emph{hierarchical generation of multi-layered knockoffs, Ridgeless regression, anomaly-based significance test, and variable selection.} Each step is elaborated upon as follows.

\paragraph{Recursive Generation of Multi-Layered Knockoffs}
Given the training set \((X, \mathbf{y})\) as input, \TheName{} employs a unique recursive generation approach to generate enhanced knockoff variable matrix \(X\), outlined in lines 1 of Algorithm~\ref{alg:Knoop}. This approach generates a multi-layered knockoff matrix, \(\widetilde{X}\), through a recursive process, thereby increasing the model capacity with a larger number of dimensions in input features for regression. The resulting \(\widetilde{X}\) is represented as:
\[
\widetilde{X} = [X_{1}, \ldots, X_{p}, \widetilde{\Kappa}_{1,1}, \ldots, \widetilde{\Kappa}_{1,p}, \ldots, \widetilde{\Kappa}_{k_{\text{max}},1}, \ldots, \widetilde{\Kappa}_{k_{\text{max}},p}]\ .
\]
Each \(\widetilde{\Kappa}_{i}=[\widetilde{\Kappa}_{i,1}, \widetilde{\Kappa}_{i,2}\dots,\widetilde{\Kappa}_{i,p}]\) represents a hierarchical knockoff matrix associated with \(X\), where \(i\) ranges from \(1\) to \(k_{\text{max}}\). Later, to mitigate the influence of varying variable units, each column of the matrix is normalized to unit norm, resulting in a standardized multi-knockoff matrix, denoted as \(\widetilde{X}_{\text{st}}\).

\vspace{-3mm}
\begin{algorithm}
\caption{\texttt{Knoop}: Knockoff with Over-Parameterization}
\label{alg:Knoop}
\begin{algorithmic}[1]
\Statex \textbf{Input data:} Training data matrix \( X = [x_1, \ldots, x_{n}]^T \), Labels for training data \( \mathbf{y} \).
\Statex \textbf{Parameters:} Ridgeless regression regularization parameter \( \lambda \).
\Statex \textbf{Output:} 
$p$-values vector \(\hat{\boldsymbol{P}}\) for original variables \(x_1, \ldots, x_p\).
Set of selected indices \( A \) within \( \{1, 2, \ldots, p\} \).

\State \(\widetilde{X} \gets \texttt{recursKnockoff}(X)\) \Comment{Generate a multi-layered knockoff matrix}
\State \( \widetilde{X}_{st}\gets\texttt{normalize}(\widetilde{X}) \) \Comment{Normalize each column to unit norm}

\State $\hat{\boldsymbol{\beta}} \gets (\widetilde{X}_{st}^\top \widetilde{X}_{st} + \lambda I)^{-1} \widetilde{X}_{st}^\top \mathbf{y}$ \Comment{Estimate the coefficients via Ridgeless regression}

\State \(\hat{\boldsymbol{P}} \gets \texttt{estPVal}(\hat{\boldsymbol{\beta}},\widetilde{X}_{st}, \mathbf{y})\) \Comment{Estimate $p$-values by significance tests}

\State \(A \gets \texttt{selectVars}(\hat{\boldsymbol{P}})\) \Comment{Select top variables by $p$-values}

\State \Return \( \hat{\boldsymbol{P}} \), \( A \), \Comment{Return $p$-values and selected variables}.
\end{algorithmic}
\end{algorithm}
\vspace{-5mm}

\paragraph{Ridgeless Regression}
Following the above step, \TheName{} integrates both the original variables and multiple knockoff vectors into a linear regression model, utilizing a Ridgeless least squares estimator to derive the coefficient estimates, as detailed in Lines 3 of Algorithm~\ref{alg:Knoop}. \TheName{} obtains a vector of estimated coefficients, \(\hat{\boldsymbol{\beta}}\), as:
\begin{align}
\hat{\boldsymbol{\beta}} = (\hat{\beta}_1, \ldots, \hat{\beta}_p, \hat{\beta}_{1,1}, \ldots, \hat{\beta}_{1,p}, \ldots, \hat{\beta}_{k_{\text{max}},1}, \ldots, \hat{\beta}_{k_{\text{max}},p}),
\end{align}
where \(\hat{\beta}_1, \ldots, \hat{\beta}_p\) are the coefficients of the original variables. Each \(\hat{\beta}_{k,i}\) are the coefficient for the \(k^{th}\) knockoff of the \(i^{th}\) variable, for \(k = 1, \ldots, k_{\text{max}}\) and \(i = 1, \ldots, p\).

\paragraph{Anomaly-based Significance Test}
Given the Ridgeless regression coefficients, \TheName{} estimates the $p$-values for the significance tests of the original variables, as executed in Lines 4 of Algorithm~\ref{alg:Knoop}. This process involves fitting a Ridgeless regression model to the expanded training set \((\widetilde{X}_{\text{st}}, \mathbf{y})\). For a sufficiently large \(k_{\text{max}}\) (ensuring that \((k_{\text{max}}+1)p \geq n_{train}\)), the regression model perfectly fits the training data. The estimated coefficients, denoted \(\hat{\boldsymbol{\beta}}\), are then used to derive the $p$-values \(\hat{P}_i\) for assessing variable significance. The resulting $p$-values \(\hat{P}_i\), where smaller values indicate higher variable importance, are used to rank the variables. This step outputs a vector \(\hat{\boldsymbol{P}}\), containing calculated $p$-values for each original variable.

\paragraph{Variable Selection}\label{sec:var_select}
As was mentioned, the previous step of \TheName{} offers a significance ranking of variables based on estimated $p$-values. To implement \(\texttt{selectVars}(\cdot)\) in Line 5 of Algorithm~\ref{alg:Knoop}, \TheName{} provides two ways for variable selection as follows.
\begin{itemize}    
    \item \emph{Fixed-length Selection}: In addition to variable selection by adjusted $p$-values (such as the Benjamini-Hochberg procedure~\cite{benjamini1995controlling}), yet another simple selection method here involves identifying the desired number of top variables with the lowest $p$-values. This method is particularly useful when the analysis has a pre-defined budget for the number of variables. However, when no prior information is available, determining the appropriate number of variables may necessitate model selection.

    \item \emph{Varying-length Selection}: As was mentioned above, in scenarios where the optimal number of variables for selection varies, validation-based model selection becomes essential. This approach utilizes an additional validation dataset or cross-validation~\cite{energy} on the training data to tune-and-error the model and determine the appropriate number of variables.

\end{itemize}
%
%
In following sections, we present the details of two core algorithms --- \emph{recursive generation of multi-layered knockoffs} and \emph{anomaly-based significance tests}.

\vspace{-1mm}
\subsection{Recursive Generation of Multi-Layered Knockoffs}\vspace{-1mm}
This section outlines the proposed algorithm for recursive generation of multi-layered knockoffs, detailed in Algorithm~\ref{alg:hierKnock}. Given a data matrix based on the $n$ samples of the original $p$ variables, denoted as $X$ --- a $n\times p$ matrix, and the number of layers desired $\ell_{max}$, this step would recursively generate $2^{\ell_{max}}-1$ sets of knockoff variables, totalling $(2^{\ell_{max}}-1)p$ variables, from the original variables. It returns a $n\times ( 2^{\ell_{max}}p)$ knockoff matrix, denoted as $\widetilde{X}$, including both original variables and generated ones. Note that to simplify our analysis in the rest of our paper, we use $k_{max}=2^{\ell_{max}}-1$.

\vspace{-2mm}
\begin{algorithm}
\caption{\texttt{recursKnockoff}: Recursive Generation of Multi-Layered Knockoffs}
\label{alg:hierKnock}
\begin{algorithmic}[1]
\Statex \textbf{Input:} Original variable matrix \(X\), the number of layers desired \(\ell_{max}\)\ (note that we use \(k_{max}=2^{\ell_{max}}-1\) to simplify our analysis in the rest part of this article), regularization parameter \( \lambda \).
\Statex \textbf{Output:} Multi-layered knockoff matrix \(\widetilde{X}\).

\State \(\kappa^1 \gets \text{computeKnockoff}(X)\) \Comment{Compute the initial knockoff matrix}
\State \(\Kappa^1 = [X \,|\, \kappa^1]\) \Comment{Concatenate \(X \text{ and } \kappa^\ell\) into a new matrix $\Kappa^1$}

\For{\(\ell = 1\) to \(\ell_{max}-1\)}
    \State \(\kappa^{\ell+1} \gets \texttt{computeKnockoff}(\Kappa^{\ell})\) \Comment{Compute the $\ell^{th}$-layer knockoff matrix}
    \State \(\Kappa^{\ell+1} \gets [\Kappa^\ell \,|\, \kappa^{\ell+1}]\) \Comment{Concatenate \(\Kappa^\ell \text{ and } \kappa^{\ell+1}\) into a new matrix $\Kappa^{\ell+1}$}
\EndFor
\State \Return  \(\widetilde{X} = \Kappa^{\ell_{max}}\) \Comment{Return a multi-layered ($\ell_{max}$ layers) knockoff matrix}
\end{algorithmic}
\end{algorithm}
\vspace{-3mm}

\subsubsection{Algorithm Design}
Initially, the matrix \(\kappa^1\), denoted as the matrix of knockoff counterparts of the original variable matrix \(X\), is computed using $\texttt{computeKnockoff}(\cdot)$ and then horizontally concatenated with \(X\) to form the first layer \(\Kappa^1\) (lines 1-2 in Algorithm~\ref{alg:hierKnock}):
\[
\Kappa^1 = [X \,|\, \kappa^1]
\]
Subsequent layers are constructed by repeating this process for each layer, where a new knockoff \(\kappa^{\ell+1}\) is created for each existing layer \(\Kappa^\ell\) via $\texttt{computeKnockoff}(\cdot)$ and concatenated to form \(\Kappa^{\ell+1}\) (lines 3-6 in Algorithm~\ref{alg:hierKnock}):
\[
\Kappa^{\ell+1} = [\Kappa^\ell \,|\, \kappa^{\ell+1}]
\]
The procedure continues until the desired number of layers \(\ell_{max}\) is reached, culminating in the final hierarchical knockoff matrix \(\widetilde{X} = \Kappa^{\ell_{max}}\). Note that, in the rest of this paper, we use a layer-column conversion and write the structure of $\widetilde{X}$ as follows.
\[
\widetilde{X} = [X_{1}, \ldots, X_{p}, \widetilde{\Kappa}_{1,1}, \ldots, \widetilde{\Kappa}_{1,p}, \ldots, \widetilde{\Kappa}_{k_{\text{max}},1}, \ldots, \widetilde{\Kappa}_{k_{\text{max}},p}]\ ,
\]
where $[\widetilde{\Kappa}_{2^{\ell-1},1},\dots,\widetilde{\Kappa}_{2^{\ell-1},p},\dots, \widetilde{\Kappa}_{2^{\ell}-1,1},\dots,\widetilde{\Kappa}_{2^{\ell}-1,p}]=\kappa^\ell$ for each $\ell=1,\ 2\dots, \ell_{max}$. For example, $[\widetilde{\Kappa}_{1,1},\dots,\widetilde{\Kappa}_{1,p}]=\kappa^1$ and $[\widetilde{\Kappa}_{2,1},\dots,\widetilde{\Kappa}_{2,p},\widetilde{\Kappa}_{3,1},\dots,\widetilde{\Kappa}_{3,p}]=\kappa^2$.  
The overall generation $\widetilde{X}$ totals $p$ original variables and $(2^{\ell_{max}}-1)\cdot p$ knockoff variables (as $k_{max}=2^{\ell_{max}}-1$), which is then utilized for further analysis. 

\subsubsection{Algorithm Analysis}
While traditional Knockoff methods focus on ensuring exchangeability between each original variable and its knockoff, \TheName{} is expected to maintain such exchangeability for multiple sets of knockoffs generated recursively from the same original variable. Here, we provide a brief analysis on the exchangeability of generated variables to the original ones~\cite{knockoffGeneration1,knockoffGeneration2}. To establish our analysis, we first make the following assumptions.
\begin{itemize}
    \item[\em A1] We adopt the knockoff generation procedure described in Model-X Knockoff filters~\cite{modelXKnockoff}. Specifically, for any input matrix $X_{inp}$ and its generated knockoff matrix $X_{gen}=\texttt{computeKnockoff}(X_{inp})$, in addition to the exchangeability between each column in  $X_{inp}$ and its counterpart in $X_{gen}$, we assume that each column in $X_{inp}$ and $X_{gen}$ is distinct.

   \item[\em A2] We assume the \emph{transitivity} of exchangeability among multiple random matrices. Let denote $V_1$, $V_2$, and $V_3$ as three independent random matrices. Suppose $V_1$ and $V_2$ exchangeable and  $V_2$ and  $V_3$ are exchangeable. We say $V_1$ and $V_3$ should be exchangeable.
\end{itemize}
The assumption is critical to ensure the \emph{recursive generation of multi-layered knockoff} approach would not create knockoff variables that are identical to its original variables or other knockoff ones. In our experiments, we find this assumption holds in both simulation studies or evaluations based on real-world datasets. Based on the above assumptions, we derive the main analysis result for the \emph{recursive generation of multi-layered knockoffs} as following proposition.

\begin{proposition}[Exchangability between multi-layered knockoffs]\label{prop:exchange} Given the input matrix $X=[X_{1}, \ldots, X_{p}]$, Algorithm~\ref{alg:hierKnock} outputs a matrix $\widetilde{X}$ with the structure $\widetilde{X}=[X_{1}, \ldots, X_{p}, \widetilde{\Kappa}_{1,1}, \ldots, \widetilde{\Kappa}_{1,p}, \ldots, \widetilde{\Kappa}_{k_{\text{max}},1}, \ldots, \widetilde{\Kappa}_{k_{\text{max}},p}]$. We say that for each $j=1,\ 2,\dots, k_{max}$, each $j'=1,\ 2,\dots, k_{max}$ and $j\neq j'$, the matrix $X$ and the sub-matrix $\widetilde{\Kappa}_{j} = [\widetilde{\Kappa}_{j,1}\dots\widetilde{\Kappa}_{j,p}]$ of $\widetilde{X}$ are exchangeable, such that
\begin{equation}\label{eq:X_Kappa_swap}
[X_{1},\dots,X_p,\widetilde{\Kappa}_{j,1}\dots\widetilde{\Kappa}_{j,p}]_{\mathrm{swap}(S)} \overset{d}{=}[X_{1},\dots,X_p,\widetilde{\Kappa}_{j,1}\dots\widetilde{\Kappa}_{j,p}],
\end{equation}
and the sub-matrices $[\widetilde{\Kappa}_{j,1},\dots,\widetilde{\Kappa}_{j,p}]$ and $[\widetilde{\Kappa}_{j',1}\dots,\widetilde{\Kappa}_{j',p}]$ are exchangeable, such that
\begin{align}
\label{eq:Kappa_Kappa_prime_swap}
[\widetilde{\Kappa}_{j,1}\dots\widetilde{\Kappa}_{j,p},\widetilde{\Kappa}_{j',1}\dots\widetilde{\Kappa}_{j',p}]_{\mathrm{swap}(S)} \overset{d}{=}[\widetilde{\Kappa}_{j,1}\dots\widetilde{\Kappa}_{j,p},\widetilde{\Kappa}_{j',1}\dots\widetilde{\Kappa}_{j',p}],
\end{align}
where the operator $A_{\mathrm{swap}(S)} \overset{d}{=}A$ for a matrix $A$ was defined in the \textbf{Definition 2} in~\cite{modelXKnockoff} representing the distributional invariance of the matrix after certain column-wise swapping --- a cornerstone of exchangeability.
\end{proposition}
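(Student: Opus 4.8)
The plan is to deduce both displayed identities from a single stronger claim: that \emph{every pair of the $n\times p$ blocks in the collection $\{X,\widetilde{\Kappa}_{1},\dots,\widetilde{\Kappa}_{k_{max}}\}$ is exchangeable under the $\mathrm{swap}(\cdot)$ operator}. Equation~\eqref{eq:X_Kappa_swap} is then the case where one block of the pair is $X$, and Equation~\eqref{eq:Kappa_Kappa_prime_swap} is the case of two distinct knockoff blocks $\widetilde{\Kappa}_{j},\widetilde{\Kappa}_{j'}$. Thus it suffices to prove pairwise exchangeability across the whole collection.

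First I would make the recursive bookkeeping explicit. Setting $\Kappa^{0}:=X$, each call $\kappa^{\ell+1}=\texttt{computeKnockoff}(\Kappa^{\ell})$ acts on the $2^{\ell}$ length-$p$ blocks of $\Kappa^{\ell}$---block $0$ being $X$ and block $r\ge 1$ being $\widetilde{\Kappa}_{r}$---and returns $2^{\ell}$ fresh blocks $\widetilde{\Kappa}_{2^{\ell}},\dots,\widetilde{\Kappa}_{2^{\ell+1}-1}$ in which $\widetilde{\Kappa}_{2^{\ell}+r}$ is the knockoff of the $r$-th input block. Recording this as a \emph{parent map}, the parent of $\widetilde{\Kappa}_{m}$ sits at position $m-2^{\lfloor\log_{2}m\rfloor}$, an index strictly smaller than $m$; consequently the parent relation forms a tree rooted at $X$, so the induced graph on the blocks is connected.

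Next I would establish exchangeability along each individual tree edge. By assumption A1 the augmented matrix $[\Kappa^{\ell}\,|\,\kappa^{\ell+1}]$ is invariant under swapping any subset of the columns of $\Kappa^{\ell}$ with their knockoff counterparts. Choosing the swap set $S$ to be any subset of the $p$ columns of a single input block $r$ leaves every other column untouched; marginalizing the resulting distributional identity onto block $r$ of $\Kappa^{\ell}$ and block $r$ of $\kappa^{\ell+1}$ recovers precisely the $\mathrm{swap}(\cdot)$-invariance of that single pair. This produces one exchangeable pair per tree edge, the base edge between $X$ and $\widetilde{\Kappa}_{1}$ arising from $\kappa^{1}=\texttt{computeKnockoff}(X)$.

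Finally I would lift these edgewise relations to all pairs via the transitivity assumption A2. Because the block graph is a connected tree, any two blocks are joined by a path whose consecutive vertices are directly exchangeable, and repeated application of A2 along that path makes the two endpoints exchangeable; taking a path emanating from $X$ yields Equation~\eqref{eq:X_Kappa_swap}, and a path linking $\widetilde{\Kappa}_{j}$ to $\widetilde{\Kappa}_{j'}$ through their common ancestor yields Equation~\eqref{eq:Kappa_Kappa_prime_swap}. I expect this last step to be the main obstacle: transitivity of exchangeability fails in general for correlated matrices, so the argument rests entirely on A2, and one must also verify that the marginalization in the preceding step genuinely delivers swap-invariance for \emph{every} subset of the $p$ within-block columns rather than only the all-columns swap. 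The distinctness clause of A1 plays a supporting role, ensuring no generated block coincides with an earlier one so that the tree---and hence the path argument---is well defined.
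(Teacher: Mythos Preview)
Your proposal is correct and follows essentially the same strategy as the paper: establish swap-invariance for each block and its direct knockoff via assumption~A1, then propagate to all pairs using the transitivity assumption~A2. The paper phrases this as a layer-by-layer induction rather than your explicit parent-tree plus path argument, but the underlying logic---edgewise exchangeability chained through A2---is identical, and your formulation is in fact tidier about the bookkeeping and the marginalization step.
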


\begin{proof}
We provide a brief proof based on following steps:

\begin{itemize}
    \item[] \textbf{Step 1.} By the exchangeability property in the definition of model-$X$ knockoff given by Definition 2 in \cite{modelXKnockoff}, for the input matrix $X=[X_{1}, \ldots, X_{p}]$ and its knockoff matrix $\kappa^1 = [\widetilde{\Kappa}_{1,1},\dots,\widetilde{\Kappa}_{1,p}] = \texttt{computeKnockoff}(X)$, we have:
    $[X_{1},\dots,X_p,\widetilde{\Kappa}_{1,1}\dots\widetilde{\Kappa}_{1,p}]_{\mathrm{swap}(S)} \overset{d}{=}[X_{1},\dots,X_p,\widetilde{\Kappa}_{1,1}\dots\widetilde{\Kappa}_{1,p}].$

    \item[] \textbf{Step 2:} Then we prove that the second layer of $\widetilde{X}$ matrix, \(\widetilde{\Kappa}^2\), satisfies Proposition~\ref{prop:exchange}. By definition of knockoff, 
    \begin{align}
    \label{eq:step2_knockoff_equality}
    \kappa^2 \overset{d}{=} \widetilde{\Kappa}^1
    \end{align}
    Therefore, $X \overset{d}{=} [\widetilde{\Kappa}_{2,1}, \dots, \widetilde{\Kappa}_{2,p}]$, and $[\widetilde{\Kappa}_{1,1}, \dots, \widetilde{\Kappa}_{1,p}] \overset{d}{=} [\widetilde{\Kappa}_{3,1}, \dots, \widetilde{\Kappa}_{3,p}]$ so that $X$ and $[\widetilde{\Kappa}_{2,1}, \dots, \widetilde{\Kappa}_{2,p}]$ satisfies Equation~\ref{eq:X_Kappa_swap}, $[\widetilde{\Kappa}_{1,1}, \dots, \widetilde{\Kappa}_{1,p}] \text{ and } [\widetilde{\Kappa}_{3,1}, \dots, \widetilde{\Kappa}_{3,p}]$ satisfies Equation~\ref{eq:Kappa_Kappa_prime_swap}. Besides, $[\widetilde{\Kappa}_{2,1}, \dots, \widetilde{\Kappa}_{2,p}]$ and $[\widetilde{\Kappa}_{3,1}, \dots, \widetilde{\Kappa}_{3,p}]$ satisfies Equation~\ref{eq:Kappa_Kappa_prime_swap}.

    So by Lemma \emph{A2}, $X$ and $[\widetilde{\Kappa}_{3,1}, \dots, \widetilde{\Kappa}_{3,p}]$ satisfies Equation~\ref{eq:X_Kappa_swap} and $[\widetilde{\Kappa}_{1,1}, \dots, \widetilde{\Kappa}_{1,p}]$ and $[\widetilde{\Kappa}_{2,1}, \dots, \widetilde{\Kappa}_{2,p}]$ satisfies Equation~\ref{eq:Kappa_Kappa_prime_swap}.

    \item[] \textbf{Step 3:} Suppose for $X$ and any two knockoff submatrices $D = [\widetilde{\Kappa}_{j_1,1}, \dots, \widetilde{\Kappa}_{j_1,p}]$ and $E = [\widetilde{\Kappa}_{j_2,1}, \dots, \widetilde{\Kappa}_{j_2,p}]$ (where $j_1 \neq j_2$) from the $\ell^{th}$ layer, Equation~\ref{eq:X_Kappa_swap} and Equation~\ref{eq:Kappa_Kappa_prime_swap} hold:
    \begin{align}
    \label{eq:step3_swap}
    [X \quad D]_{\mathrm{swap}(S)} \overset{d}{=} [X \quad D]\quad \text{and}\quad     [D \quad E]_{\mathrm{swap}(S)} \overset{d}{=} [D \quad E]
    \end{align}
    We now want to show that the $(\ell+1)^{th}$ layer, $\widetilde{\Kappa}^{\ell+1}$, satisfies exchangeability. With Equation~\ref{eq:step3_swap}, we only need to prove that for any two knockoff submatrices in $\widetilde{\Kappa}^{\ell+1}$, denoted as $F = [\widetilde{\Kappa}_{j_3,1}, \dots, \widetilde{\Kappa}_{j_3,p}]$ and $G = [\widetilde{\Kappa}_{j_4,1}, \dots, \widetilde{\Kappa}_{j_4,p}]$, we have:
    $[X \quad G]_{\mathrm{swap}(S)} \overset{d}{=} [X \quad G]$, $[F \quad G]_{\mathrm{swap}(S)} \overset{d}{=} [F \quad G]$, $[D \quad G]_{\mathrm{swap}(S)} \overset{d}{=} [D \quad G]$.
    To prove that, note
    \begin{align}
    \label{eq:step3_kappa_ell}
    \widetilde{\Kappa}^\ell \overset{d}{=} \kappa^{\ell+1},\quad\text{then}\quad
    X \overset{d}{=} [\widetilde{\Kappa}_{2^{\ell-1},1}, \ldots, \widetilde{\Kappa}_{2^{\ell-1},p}].
    \end{align}
    so $[F \quad G]_{\mathrm{swap}(S)} \overset{d}{=} [F \quad G]$. Since \( F \) can be any submatrix in \( \kappa^{\ell+1} \), $[\widetilde{\Kappa}_{2^{\ell-1},1}, \ldots, \widetilde{\Kappa}_{2^{\ell-1},p}, G]_{\mathrm{swap}(S)} \overset{d}{=} [\widetilde{\Kappa}_{2^{\ell-1},1}, \ldots, \widetilde{\Kappa}_{2^{\ell-1},p}, G]$. Then by Equation~\ref{eq:step3_kappa_ell} and \emph{A2}, $[X \quad G]_{\mathrm{swap}(S)} \overset{d}{=} [X \quad G]$. And by Equation~\ref{eq:step3_swap} and \emph{A2}, $[D \quad G]_{\mathrm{swap}(S)} \overset{d}{=} [D \quad G]$. Thus Proposition~\ref{prop:exchange} holds for the \((\ell+1)^{th}\) layer.
\end{itemize}
Therefore, the multi-layered knockoffs generated by the recursive process maintain the exchangeability property with the original variables and among themselves.
\end{proof}

\emph{Remark on \emph{Proposition}~\ref{prop:exchange}.} \TheName{} ensures the exchangeability both between each original variable and its knockoff counterparts (in Equation~\ref{eq:X_Kappa_swap}) and between any two knockoff variables rooted on the same original variable (in Equation~\ref{eq:Kappa_Kappa_prime_swap}).

\vspace{-1mm}\subsection{Anomaly-based Significance Test}\vspace{-1mm}
This section outlines the procedure of \emph{anomaly-based significance test} based on the coefficients of Ridgeless regression. Specifically, \TheName{} considers an original variable significant (relative to the response variable) when its coefficient becomes an outlier in the total \(k_{max}+1\) coefficients of the original variable together with its \(k_{max}\) knockoff counterparts. To the end, the proposed method leverages the coefficient differences between the original variables and their knockoff counterparts to estimate the $p$-values, prioritizes variables by their significance, and outputs a ranking list of variables for selection purposes.

\vspace{-2mm}
\begin{algorithm}
\caption{\texttt{estPVal}: Anomaly-based Significance Test}\label{alg:estPVal}
\begin{algorithmic}[1]
\Statex \textbf{Input:} Regression coefficients \(\hat{\boldsymbol{\beta}} = (\hat{\beta}_1, \ldots, \hat{\beta}_p, \hat{\beta}_{1,1}, \ldots, \hat{\beta}_{1,p}, \ldots, \hat{\beta}_{k_{max},1}, \ldots, \hat{\beta}_{k_{max},p})\), where \(\hat{\beta}_i\) represents the coefficient for the \(i^{th}\) original variable and \(\hat{\beta}_{k,i}\) represents the coefficient for the \(k^{th}\) knockoff of the \(i^{th}\) variable.
\Statex \textbf{Output:} $p$-values vector \(\hat{\boldsymbol{P}}\) for original variables \(x_1, \ldots, x_p\).

\For{$i = 1, \ldots, p$}
    \State \(\bar{\beta}_i \gets \sum_{k=1}^{k_{max}} \hat{\beta}_{k,i}/{k_{max}}\) 
    \State \(S_{\hat{\beta}_i} \gets \sqrt{(\sum_{k=1}^{k_{max}} (\hat{\beta}_{k,i} - \bar{\beta}_i)^2) / (k_{max} - 1)}\) \Comment{Compute the sample mean \(\bar{\beta}_i\) and standard deviation \(S_{\hat{\beta}_i}\) of all $k_{max}$ knockoff coefficients for the $i^{th}$ variable}

    \State \(\hat{Z}_i \gets \frac{\bar{\beta}_i - \hat{\beta}_i}{S_{\hat{\beta}_i}}\) \Comment{Compute the Z-statistic by coefficient differences}

    \State $\hat{P}_i \gets 2\Phi(-\left|\hat{Z}_i\right|)$ \Comment{Calculate the two-tailed \(p\)-value using the Z-statistic}
\EndFor

\State \Return \( \hat{\boldsymbol{P}} \) \Comment{The scalar \( \hat{P}_i \) in \( \hat{\boldsymbol{P}} \) is the $p$-value for the \( i^{th} \) variable}.
\end{algorithmic}
\end{algorithm}

\vspace{-1mm}
\subsubsection{Coefficients Distribution and Significance Test}\vspace{-1mm}
Given that the \(k_{max}\) knockoff counterparts of each original variable are independent and identically distributed, we make assumptions as follows.
\begin{itemize}
    \item[\em A3] This work follows existing studies on the distribution of regression coefficients~\cite{Gaussian,clogg1995statistical,paternoster1998using} and assumes that for any original variable, the regression coefficients of its knockoff counterparts are normally distributed. Such that $\hat{\beta}_{k,i} \sim N(\mu_i, \sigma^2_i)$, where $\mu_i$ and $\sigma^2_i$ refer to the mean and variance of the coefficients distribution of knockoff variables for the $i^{th}$ original variable, for each \(k = 1, 2, 3,\ldots, k_{max}\). 

    \item[\em A4] Upon the above assumption, this work further assumes that when the coefficients of each original variable and its knockoff counterparts follows the same distribution, the original variable \(x_i\) is considered to be independent of the response \(\mathbf{y}\).  
    
\end{itemize}
Hereby, we establish the \emph{anomaly-based significance test}, using the sample mean and variance estimated from the coefficients \(\hat{\beta}_{k,i}\) for \(k = 1, 2, 3,\ldots, k_{max}\), as follows.

\begin{definition}[Anomaly-based Significance Test]\label{def:test} For the $i^{th}$ original variable, let denote $\bar{\beta}_i$ and $S^2_{\hat{\beta}_i}$ as the sample mean and variance of the coefficients for its knockoff counterparts calculated as follows
\begin{align}
\bar{\beta}_i = \frac{1}{k_{max}} \sum_{k=1}^{k_{max}} \hat{\beta}_{k,i}, \quad S^2_{\hat{\beta}_i} = \frac{1}{k_{max} - 1} \sum_{k=1}^{k_{max}} (\hat{\beta}_{k,i} - \bar{\beta}_i)^2\ .
\end{align}
The null ($H_0$) and alternative ($H_1$) hypotheses for testing the significance of the $i^{th}$ original variable relevant to the response variable $y$ are given as follows.
\begin{align}
H_0: \hat{\beta}_i \sim N(\bar{\beta}_i, S^2_{\hat{\beta}_i}), \quad H_1: \hat{\beta}_i \not\sim N(\bar{\beta}_i, S^2_{\hat{\beta}_i})\ .
\end{align}
Note that only coefficients of the knockoff variables are considered for distribution modeling, while the coefficient of the original variable is used for testing.
\end{definition}
We have transformed variable selection into a hypothesis testing framework, enabling a systematic and quantifiable evaluation of statistical relevance and supporting informed decisions on variable inclusion.  

\vspace{-1mm}
\subsubsection{Estimation of $p$-values based on Coefficient Differences}\vspace{-1mm}
To implement the above significance test, this step employs a Z-statistic approach~\cite{clogg1995statistical,paternoster1998using} to estimate the $p$-value of the original features. Given the vector of coefficients, \(\hat{\boldsymbol{\beta}}\), as the input, the Z-statistic approach first calculates the normalized difference between coefficients for the $i^{th}$ original variable as follows.
\begin{align}
\hat{Z}_i &:= \frac{\sqrt{k_{max} - 1} (\hat{\beta}_i - \bar{\beta}_i)}{S_{\hat{\beta}_i}},
\end{align}
where \(\hat{\beta}_i\) denotes the estimated coefficient of the \(i^{th}\) original variable, \(\bar{\beta}_i\) and \(S_{\hat{\beta}_i}\) have been defined in \emph{Definition}~\ref{def:test}. Later, we compute the $p$-value of the \(i^{th}\) original variable relevant to the response variable \(\mathbf{y}\) using the equation as follows.
\begin{align}
\hat{P}_i &:= 2\Phi(-|\hat{Z}_i|),
\end{align}
where \(\Phi\) denotes the cumulative distribution function (CDF) of the standard normal distribution. 
A small \(p\)-value suggests a significant, non-random link between the variable and \(\mathbf{y}\), guiding effective variable selection in \TheName{}. 

\vspace{-1mm}\subsubsection{Algorithm Analysis}\vspace{-1mm}
We formulate our main analytical result as the control of False Discovery Rate (FDR) .
\begin{proposition}[FDR Control of Anomaly-based Significance Test]\label{prop:fdr}
Let $V$ be the number of false positives (i.e., original variables falsely selected as significant) and $R$ be the total number of original variables selected as significant by the anomaly-based significance test at a given threshold. The FDR of the test is defined as:
$$
\mathrm{FDR} := \mathbb{E}\left[\frac{V}{R} \mid R>0\right] \cdot \mathbb{P}(R>0)
$$
Under assumptions A3 and A4, the anomaly-based significance test controls the FDR at level $\alpha$ if the $p$-values of the original variables are estimated using the Z-statistic approach and the Benjamini-Hochberg (BH) procedure~\cite{benjamini1995controlling} is applied to select significant variables based on the estimated $p$-values at a target FDR level $\alpha$.
\end{proposition}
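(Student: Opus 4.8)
The plan is to reduce the statement to a standard application of the Benjamini--Hochberg (BH) theorem, whose FDR-control guarantee needs only two ingredients: that each null $p$-value be (super-)uniformly distributed, i.e. $\mathbb{P}(\hat{P}_i \le t) \le t$ for all $t \in [0,1]$ whenever $H_0$ holds for variable $i$, and that the joint dependence among the $p$-values satisfy either independence or the positive-regression-dependence-on-a-subset (PRDS) condition. Accordingly, I would organize the argument in three stages: (i) characterize the null distribution of the test statistic $\hat{Z}_i$ and deduce super-uniformity of $\hat{P}_i$; (ii) verify the dependence structure across the $p$ original variables; and (iii) invoke the BH theorem to conclude FDR control at level $\alpha$.

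For stage (i), I would fix a null variable $x_i$, i.e. one for which $x_i$ is independent of $\mathbf{y}$. By assumption A4 this is precisely the regime in which the original coefficient $\hat{\beta}_i$ and the $k_{max}$ knockoff coefficients $\hat{\beta}_{1,i}, \ldots, \hat{\beta}_{k_{max},i}$ are drawn from a common distribution, which by A3 is $N(\mu_i, \sigma_i^2)$. Treating these as $k_{max}+1$ i.i.d. Gaussian draws, the centering $\hat{\beta}_i - \bar{\beta}_i$ is mean-zero Gaussian with variance $\sigma_i^2(1+1/k_{max})$, while $(k_{max}-1)S_{\hat{\beta}_i}^2/\sigma_i^2$ is an independent $\chi^2_{k_{max}-1}$ variate; hence $\hat{Z}_i$ is, up to the scale factor $\sqrt{1+1/k_{max}}$, a Student-$t$ statistic on $k_{max}-1$ degrees of freedom. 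The column-wise exchangeability of Proposition~\ref{prop:exchange} makes $\hat{\beta}_i$ distributionally interchangeable with each knockoff coefficient under $H_0$, so the statistic is symmetric about zero and the two-tailed quantity $\hat{P}_i = 2\Phi(-|\hat{Z}_i|)$ is uniform on $[0,1]$ in the Gaussian limit $k_{max}\to\infty$. Here I would use $\Phi$ as the large-$k_{max}$ surrogate for the $t$-CDF and note that the residual discrepancy is $O(1/k_{max})$, which supplies the super-uniform (rather than exactly uniform) reading of the null marginals; the over-parameterized regime $(k_{max}+1)p \ge n_{train}$ makes this asymptotic calibration natural.

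For stage (ii), the delicate point is that all entries of $\hat{\boldsymbol{\beta}}$ arise from a single Ridgeless fit on $(\widetilde{X}_{st}, \mathbf{y})$, so the $\hat{Z}_i$, and hence the $\hat{P}_i$, are in general dependent across $i$. I would argue that the symmetry induced by the exchangeability of Proposition~\ref{prop:exchange}, together with the sign-flip invariance of the knockoff construction, yields PRDS of the null $p$-values on the set of nulls; where this cannot be established cleanly, the fallback is the Benjamini--Yekutieli correction, which secures FDR control under arbitrary dependence at the cost of the usual $\sum_{j=1}^{p} 1/j$ factor. Finally, for stage (iii), combining super-uniform null marginals with the verified dependence condition, the classical BH theorem gives $\mathrm{FDR} \le (m_0/p)\,\alpha \le \alpha$, where $m_0$ is the number of true nulls, which is the claimed bound.

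I expect the main obstacle to be stage (ii): showing that the jointly estimated, and therefore correlated, $p$-values obey a dependence condition strong enough for the sharp BH guarantee rather than only the weaker Benjamini--Yekutieli bound. A secondary subtlety is the finite-$k_{max}$ gap between the exact $t$-distribution of $\hat{Z}_i$ and the Gaussian calibration used to form $\hat{P}_i$, which renders the null $p$-values only approximately uniform and thus forces the super-uniform version of the BH hypotheses.
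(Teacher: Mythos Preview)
Your route diverges from the paper's and carries a sign error that breaks the argument.

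The paper's proof is much shorter and more literal. It reads the null hypothesis exactly as stated in Definition~\ref{def:test}, namely $H_0:\hat\beta_i\sim N(\bar\beta_i,S_{\hat\beta_i}^2)$, with $\bar\beta_i$ and $S_{\hat\beta_i}$ treated as given. Under that reading, $(\hat\beta_i-\bar\beta_i)/S_{\hat\beta_i}$ is \emph{exactly} $N(0,1)$ under $H_0$ (Step~1), so $\hat P_i=2\Phi(-|\hat Z_i|)$ is \emph{exactly} $U(0,1)$ (Step~2); the paper then asserts, via its interpretation of A3--A4, that the null $p$-values are mutually independent, and invokes the original Benjamini--Hochberg theorem directly (Step~3). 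There is no $t$-distribution, no large-$k_{\max}$ asymptotics, no super-uniformity, and no PRDS or Benjamini--Yekutieli discussion. Your more careful first-principles derivation is in spirit more honest about what A3 actually grants, but it is not what the paper does.

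More importantly, your stage~(i) conclusion goes the wrong way. With $\hat\beta_i,\hat\beta_{1,i},\ldots,\hat\beta_{k_{\max},i}$ i.i.d.\ $N(\mu_i,\sigma_i^2)$ you correctly obtain $\hat Z_i=\sqrt{1+1/k_{\max}}\cdot T$ with $T\sim t_{k_{\max}-1}$. But both the scale factor $\sqrt{1+1/k_{\max}}>1$ and the heavier-than-Gaussian tails of $t_{k_{\max}-1}$ push $|\hat Z_i|$ stochastically \emph{above} a standard normal $|Z|$. Consequently the Gaussian-calibrated $p$-value $\hat P_i=2\Phi(-|\hat Z_i|)$ is stochastically \emph{smaller} than $U(0,1)$, i.e.\ $\mathbb P(\hat P_i\le t)\ge t$, which is anti-conservative, not super-uniform. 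The BH inequality therefore does not hold in the direction you need, and your argument yields at best an approximate $\mathrm{FDR}\le\alpha+O(1/k_{\max})$ statement rather than the finite-sample bound claimed in the proposition. Your stage~(ii) concern about cross-$i$ dependence is legitimate, but the paper sidesteps it entirely by reading independence into A3--A4; if you want to match the proposition as stated, you should adopt the paper's direct reading of $H_0$ rather than introduce the $t$-calibration gap.
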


\begin{proof}
Under assumptions A3 and A4, we here prove that the test controls the FDR at level $\alpha$ when the $p$-values of the original variables are estimated using the Z-statistic approach and the Benjamini-Hochberg (BH) procedure is applied to select significant variables based on the estimated $p$-values at a target FDR level $\alpha$. 

Let $p$ be the total number of original variables, and $p_0$ be the number of original variables for which the null hypothesis $H_0$ is true (i.e., variables not significantly related to the response). Let $P_1, P_2, \ldots, P_p$ be the true $p$-values of the original variables, and $\hat{P}_1, \hat{P}_2, \ldots, \hat{P}_p$ be the estimated $p$-values using the Z-statistic approach.
\begin{itemize}
    \item[] Step 1: Under assumption A3, for any original variable $i$ for which $H_0$ is true, the Z-statistic $\hat{Z}_i$ follows a standard normal distribution:
$$
\hat{Z}_i \sim N(0, 1) \text{ under } H_0\ .
$$

\item[] Step 2: For any original variable $i$ for which $H_0$ is true, the estimated $p$-value $\hat{P}_i$ using the Z-statistic approach is uniformly distributed on $[0,1]$:
$$
\hat{P}_i \sim U(0, 1) \text{ under } H_0\ .
$$
This follows from the fact that $\hat{P}_i = 2\Phi(-|\hat{Z}_i|)$ and $\hat{Z}_i \sim N(0, 1)$ under $H_0$. 
Furthermore, under assumption A4, the estimated $p$-values $\hat{P}_i$ for variables with true $H_0$ are independent of each other, as the coefficients of the knockoff counterparts for each original variable are assumed to be independent.

\item[] Step 3: The BH procedure, when applied to the estimated $p$-values $\hat{P}_1, \hat{P}_2, \ldots, \hat{P}_p$ at a target FDR level $\alpha$, controls the FDR at level $\alpha$ under the conditions established in steps 1 and 2. This is because the BH procedure controls the FDR at level $\alpha$ when the $p$-values corresponding to the true null hypotheses are uniformly distributed on $[0,1]$ and independent of each other~\cite{benjamini1995controlling}.
\end{itemize}
Therefore, under assumptions A3 and A4, the anomaly-based significance test, which estimates $p$-values using the Z-statistic approach and applies the BH procedure to select significant variables, controls the FDR at level $\alpha$.
\end{proof}

\emph{Remark on Proposition~\ref{prop:fdr}}. The FDR control property ensures that, among the original variables selected as significant by the anomaly-based test, the expected proportion of false positives is at most $\alpha$. This provides a guarantee for the effectiveness of the tests in identifying truly significant variables while controlling the false positive rate.

\vspace{-1mm}
\section{Experiments}\vspace{-1mm}
This section uses both controlled simulations and applied real-world scenarios to evaluate the capabilities of \TheName{}, assessing the effectiveness in selecting variables for enhanced prediction.

\vspace{-2mm}
\begin{algorithm}
\caption{Data Synthesis by Simulation}\label{alg:datasyn}
\begin{algorithmic}[1]
\Statex \textbf{Input:} Sample size \(n\), parameter count \(p\), real parameter count \(p_{\text{real}}\), 
\Statex \phantom{\textbf{Input:}} variable covariance \(\Sigma \in \mathbb{R}^{p \times p}\), error variance \(\sigma^2\).
\Statex \textbf{Output:} Generated data matrix \(X\), training labels \(\mathbf{y}\), coefficient vector \(\beta\).
\State \(X \gets n \times p\) matrix with i.i.d. rows from \(\mathcal{N}(\vec{0}, \Sigma)\)
\State \(\beta_1, \ldots, \beta_{p_{\text{real}}} \gets\) entries \(\overset{\text{i.i.d.}}{\sim} U(0,1)\)
\State \(\boldsymbol{\beta} \gets \mathrm{randomPermute}([\beta_1, \ldots, \beta_{p_{\text{real}}}, 0, \ldots, 0])\)
\Comment{Randomly permute coefficients}
\State \(\boldsymbol{\varepsilon} \gets n\)-dimensional vector with i.i.d. entries from \(\mathcal{N}(0,\sigma^2)\)
\State \(\mathbf{y} \gets X \boldsymbol{\beta} + \boldsymbol{\varepsilon}\)
\State \Return variable matrix \(X\), response vector \(\mathbf{y}\), coefficients \(\boldsymbol{\beta}\).
\end{algorithmic}
\end{algorithm}
\vspace{-3mm}

\vspace{-1mm}\subsection{Simulation with Synthesized Datasets}\vspace{-1mm}
We setup two categories of experiments with the synthesized datasets based on simulations (in Algorithm~\ref{alg:datasyn}) as follows.
\begin{itemize}
    \item  \textbf{Low-dimensional experiments:} These experiments investigate various parameter settings for low-dimensional data generated from simulations. The data matrix $X$ is drawn from a multivariate normal distribution with covariance matrix $\Sigma_{i,j}=\rho^{|i-j|}$, where $\rho = 0.25$, and the error variance is set to $\sigma^2 = 1$. The experiments vary the number of variables $p \in \{80, 100, 150, 180\}$, non-zero components $p_{\text{real}} \in \{10, 20, 30, 40, 50\}$, and sample size $n \in \{85, 100, 120\}$. Each experimental setting is repeated at least 20 times to ensure robust results.  

    \item \textbf{High-dimensional experiment:} The experiment investigates the performance of the proposed method on a dataset with $p=1000$ variables, of which $p_{\text{real}}=30$ are non-zero components in the ground-truth model. The training sample size is set to $n \in \{100, 1000, 3000\}$. Each row of the variable matrix $X$ is generated from a multivariate normal distribution $\mathcal{N}(\vec{0}, \Sigma)$, where the covariance matrix is defined as $\Sigma_{i,j}=\rho^{|i-j|}$ with $\rho = 0.1$. The error variance is set to $\sigma^2 = 0.25$. The experiment is repeated at least 20 times to ensure robust results. 
\end{itemize}
\textbf{Significant variables in ground truth}: Based on the simulation settings in Algorithm~\ref{alg:datasyn}, the variables corresponding to $\beta_1,\ \beta_2,\dots,\ \beta_{P_{real}}$ are \emph{significant variables in ground truth}, as the response variable $\mathbf{y}$ is generated as the noisy linear combination of these variables with non-zero coefficients (lines 2--5 in Algorithm~\ref{alg:datasyn}).

In our experiments, we employ various regression and variable selection methods, including Ridge, Lasso, ElasticNet, Model-X and Multiple Knockoff filters~\cite{modelXKnockoff,nguyen2020aggregation}, for evaluation. For simplicity, as the generated data are already normalized, we directly compare the regression coefficients of Lasso, ElasticNet, and Ridge as the importance of variables. All these implementations are derived from \texttt{scikit-learn}, \texttt{knockpy} and \texttt{multiknockoffs}, while hyper-parameters are tuned best through cross-validation. Note that \TheName{} utilizes $\ell_{max}=3$ layers of hierarchical knockoffs (consisting of a total of 7 sets of knockoff matrices), while Multiple Knockoff employs 25 knockoff matrices according to the package default settings. To evaluate the effectiveness for variable selection, we use Area Under Curve (AUC) of the Receiver Operating Characteristic (ROC) curve to report our experiment results. As shown in Algorithm~\ref{alg:estPVal}, the raw output of \TheName{} consists of a ranking list of variables ordered by their p-values. The AUC measures the true positive rate (TPR) and false positive rate (FPR) as variables are sequentially selected from the ranking list.

\begin{table}[ht]
  \centering
  \vspace{-10mm}
  \caption{Results (AUC) for Simulation-based Evaluation (values at the \textbf{first places} and the \underline{second places}). Settings 1: $p = 80$, $p_{real} = 10$, $n = 100$, 2:  $p = 100$, $p_{real} = 10$, $n = 100$, 3: $p = 150$, $p_{real} = 10$, $n = 100$, 4: $p = 180$, $p_{real} = 10$, $n = 100$, 5: $p = 100$, $p_{real} = 20$, $n = 100$, 6: $p = 100$, $p_{real} = 30$, $n = 100$, 7: $p = 100$, $p_{real} = 40$, $n = 100$, 8: $p = 100$, $p_{real} = 50$, $n = 100$, 9:  $p = 100$, $p_{real} = 10$, $n = 85$, 10: $p = 100$, $p_{real} = 10$, $n = 120$, 11: $p = 1000$, $p_{real} = 30$, $n = 3000$, 12: $p = 1000$, $p_{real} = 30$, $n = 100$ and 13: $p = 1000$, $p_{real} = 30$, $n = 1000$. }\label{tbl:auc_value}
  \begin{tabular}{@{}ccccccc@{}}
    \toprule
    Settings & \textbf{\TheName{}} & \textbf{Model-X K.} & \textbf{Multiple K.} & \textbf{Ridge}  & \textbf{Lasso} & \textbf{ElasticNet} \\
    \hline
    \multicolumn{7}{c}{The Low-Dimensional Experiments} \\   \midrule
    1 & $\mathbf{0.788\pm 0.068}$ & $\underline{0.783 \pm 0.075}$ & $0.733 \pm 0.062$ & $0.700 \pm 0.106$ & $0.520 \pm 0.000$ & $0.616 \pm$ 0.076 \\
    2 & $\mathbf{0.784 \pm 0.063}$ & $\underline{0.776 \pm 0.071}$ & $0.733 \pm 0.063$ & $0.603 \pm 0.090$ & $0.282 \pm 0.000$ & $0.388 \pm 0.083$\\
    3 & $\mathbf{0.796 \pm 0.071}$ & $\underline{0.777 \pm 0.082}$ & $0.732 \pm 0.065$ & $0.740 \pm 0.076$ & $0.663 \pm 0.000$ & $0.708 \pm 0.041$ \\
    4 & $\mathbf{0.792 \pm 0.065}$ & $\underline{0.778 \pm 0.067}$ & $0.741 \pm 0.059$ & $0.764 \pm 0.059$ & $0.522 \pm 0.003$ & $0.603 \pm 0.051$ \\
    5 & $\underline{0.696 \pm 0.060}$ & $0.688 \pm 0.057$ & $\mathbf{0.734 \pm 0.060}$ & $0.552 \pm 0.075$ & $0.456\pm 0.000$ & $0.477\pm 0.025$ \\
    6 & $\underline{0.663 \pm 0.045}$ & $0.650 \pm 0.059$ & $\mathbf{0.738 \pm 0.062}$ & $0.534 \pm 0.055$ & $0.408 \pm 0.003$ & $0.431\pm 0.027$ \\
    7 & $\underline{0.646 \pm 0.051}$ & $0.638 \pm 0.050$ & $\mathbf{0.748 \pm 0.060}$ & $0.526 \pm 0.059$ & $0.595 \pm 0.000$ & $0.607 \pm 0.013$ \\
    8 & $\underline{0.621 \pm 0.051}$ & $0.617 \pm 0.052$ & $\mathbf{0.744 \pm 0.611}$ & $0.529 \pm 0.050$ & $0.530\pm 0.000$ & $0.533\pm 0.012$ \\
    9 & $\mathbf{0.765\pm 0.079}$ & $\underline{0.752\pm 0.073}$ & $0.745 \pm 0.056$ & $0.648\pm 0.092$ & $0.282\pm 0.000$ & $0.398\pm 0.085$ \\
    10 & $\mathbf{0.803\pm 0.067}$ & $\underline{0.791\pm 0.063}$ & $0.742 \pm 0.059$ & $0.700 \pm 0.091$ & $0.282 \pm 0.000$& $0.384 \pm 0.095$ \\ 
   \midrule
    \multicolumn{7}{c}{The High-Dimensional Experiment} \\   \midrule
    11   & $\mathbf{0.985\pm 0.011}$ & $\underline{0.981 \pm 0.013}$ & $0.737 \pm 0.056$ & $0.969 \pm 0.016$ & $0.366 \pm 0.000$ & $0.366 \pm 0.000$ \\
    12   & $0.677 \pm 0.003$ & $0.675 \pm 0.003$ & $\mathbf{0.746 \pm 0.047}$ & $\underline{0.719 \pm 0.002}$ & $0.507 \pm 0.007$ & $0.532 \pm 0.007$\\
    13   & $\underline{0.894 \pm 0.001}$ & $\mathbf{0.902 \pm 0.001}$ & $0.736 \pm 0.059$ & $0.721 \pm 0.004$ & $0.421 \pm 0.000$ & $0.421 \pm 0.000$\\
    \bottomrule
  \end{tabular}
\vspace{-3mm}
\end{table}

\vspace{-1mm}
\subsubsection{Experiment Results}\vspace{-1mm}
The experiment results for variable selection, as presented in Table~\ref{tbl:auc_value}, showcase the performance of different methods in both low-dimensional and high-dimensional settings. Across all 13 experimental settings, \TheName{} achieves the highest AUC values in 7 of them and takes the second places in 5 of them, indicating its superior ability to prioritize significant variables. Especially, for the low-dimensional experiments (settings 1-10), \TheName{} always achieves the highest or the second highest AUC values. Similarly, in the high-dimensional experiments (settings 11-13), \TheName{} maintains its superior performance, particularly when the sample size is sufficient (settings 11 and 13). These results highlight the ability of \TheName{} to accurately identify and rank significant variables, across different numbers of variables ($p$), non-zero components ($p_{\text{real}}$), and sample sizes ($n$).

\vspace{-1mm}
\subsubsection{Discussions}\vspace{-1mm}
\TheName{} and Knockoff methods generally outperform Lasso or ElasticNet in variable selection, achieving higher AUC, due to their specific design to control FDRs while maintaining power via so-called ``selective inference''~\cite{highDimFixKnock}. Utilizing coefficients from Lasso and ElasticNet for variable selection can even result in an AUC below 0.5, indicative of performance inferior to random guessing. Methods like Lasso or ElasticNet can still fit and predict data well, even when their non-zero coefficients are not assigned to the true variables, due to the inconsistency in coefficient estimates~\cite{lee2022lasso}. Specifically, in high-dimensional scenarios where an extremely sparse model is fitted, almost all the coefficients are estimated as $0$. Consequently, all variables with zero coefficients may be considered to be equally irrelevant, making it impracticable to distinguish or select meaningful variables based solely on their coefficients.

\begin{table}[ht]
  \centering
  \vspace{-5mm}
  \caption{Overview of Datasets Used for Model Evaluation}\label{tab:datasets}
  \begin{tabular}{@{}lccc@{}}
    \toprule
    \textbf{Dataset} & \textbf{\#Variables} & \textbf{\#Train/Test} & \textbf{Prediction Task} \\
    \midrule
    Alon Dataset~\cite{Alon} & 2000 & 37/25 & Classification (genetics) \\
    Communities and Crime ~\cite{Crime}& 100 & 21/9 & Regression (sociology) \\
    Superconductivity~\cite{superconductivity} & 81 & 360/40 & Regression (chemistry) \\
    Appliances Energy Prediction~\cite{energy} & 27 & 225/25 & Regression (environment)\\
    \bottomrule
  \end{tabular}
  \vspace{-5mm}
\end{table}

\subsection{Evaluation with Realistic Datasets}
Table~\ref{tab:datasets} provides the profile of every dataset for evaluation, including the number of variables, the number of samples, and the context of tasks. To carry out the experiments, we use following two categories of baseline algorithms for comparisons.
\begin{itemize}
    \item Statistical methods such as model-X Knockoff, multiple Knockoff, Ridge regression, Lasso, and ElasticNet are utilized to select variables by their statistical significance. For every comparison here, \TheName{} and these methods are set to selected a fixed-length of variables in every experiment (please refer to Section~\ref{sec:var_select}). 

    \item Global optimization-based feature selectors, including Harris Hawk Optimization (HHO)~\cite{hho}, Jaya Algorithm~\cite{jaya}, Sine Cosine Algorithm (SCA)~\cite{sca}, Salp Swarm Algorithm (SSA)~\cite{ssa}, and Whale Optimization Algorithm (WOA)~\cite{woa} are used to search the subset of features. In the experiment, the number of variables to be selected by \TheName{} is determined via cross-validation for fair comparison, as all global optimization-based methods leverage the cross-validated performance measure as the search objective for feature selection (please refer to Section~\ref{sec:var_select}). 
\end{itemize}
Note that \TheName{} employs a hierarchical knockoff structure spanning $\ell_{max}=4$ layers (totaling 15 sets of knockoff matrices). For classification tasks, we use misclassification rate for evaluation and comparison. For regression tasks, we report mean squared error (MSE) as the measurements of performance.

\begin{table}[ht]
\vspace{-10mm}
  \centering
  \caption{Result comparisons with statistical methods for fixed-length selection (values at the \textbf{first places} and the \underline{second places})}\label{tbl:stat_selector}
  \begin{tabular}{@{}cccccccc@{}}
    \toprule
    \#Selected Variables & \TheName{} & \textbf{Model-X K.} & \textbf{Multiple K.} & \textbf{Ridge} & \textbf{Lasso} & \textbf{ElasticNet} & \textbf{Lars} \\
    \midrule
    \multicolumn{8}{c}{Misclassification Rate on Alon Dataset}\\ \midrule
    1 & $\mathbf{0.400}$ & $\underline{0.600}$ & $\mathbf{0.400}$ & $\mathbf{0.400}$ & $\mathbf{0.400}$ & $\mathbf{0.400}$ & $\mathbf{0.400}$ \\
    2 & $\mathbf{0.120}$ & 0.320 & $\underline{0.160}$ & 0.400 & 0.440 & 0.400 & 0.360 \\
    4 & $\mathbf{0.160}$ & $\underline{0.240}$ & 0.320 & 0.280 & 0.280 & 0.400 & 0.360 \\
    8 & \underline{0.240} & 0.280 & 0.360 &\underline{0.240} & $\mathbf{0.160}$ & \underline{0.240} & 0.360 \\
    \midrule
    \multicolumn{8}{c}{MSE on Communities and Crime Dataset ($\times10^{-7}$)}\\ \midrule     
    1 & $\mathbf{3.147}$ & $\underline{24.200}$ & $\underline{24.200}$ &$2084$ & $\underline{24.200}$ & $\underline{24.200}$ & $5869$ \\
    2 & $\mathbf{5.447}$ & $\underline{24.440}$ & $272.070$ &$2187$ & $\underline{24.440}$ & $\underline{24.440}$ & $5120$ \\
    4 & $\mathbf{63.204}$ & $17630$ & $8745$ &$\underline{2522}$ & $17630$ & $17630$ & $196100$ \\
    8 & $\mathbf{401.5}$ & $88530$ & $\underline{8799}$ &$9213$ & $88530$ & $88530$ & $182800$ \\  
    \midrule
    \multicolumn{8}{c}{MSE on Superconductivity Dataset ($\times10^{2}$)}\\ \midrule
    1 & $\mathbf{5.003}$ & $9.023$ & $\underline{6.267}$ & $9.730$ & $9.847$ & $9.847$ & $7.034$ \\
    4 & $\mathbf{2.714}$ & $9.874$ & $\underline{4.358}$ & $11.125 $ & $6.034$ & $6.462$ & $5.049$ \\
    8 & $3.047$ & $8.826$ & $\underline{2.436}$ & $4.850$ & $3.095$ & $\mathbf{2.348}$ & $3.064$ \\
    \midrule
    \multicolumn{8}{c}{MSE on Appliances Energy Prediction Dataset ($\times10^{3}$)}\\ \midrule

    1 & $\mathbf{3.861}$ & $\underline{5.527}$ & $\mathbf{3.861}$ & $\underline{5.527}$ & $\underline{5.527}$ & $5.321$ & $5.624$ \\
    2 & $\mathbf{3.855}$ & $5.357$ & $\underline{4.295}$ & $5.079$ & $5.357$ & $5.214$ & $5.160$ \\
    4 & $\mathbf{3.720}$ & $4.922$ & $\underline{4.444}$ & $4.860$ & $5.066$ & $5.834$ & $5.059$ \\
    8 & $\mathbf{3.895}$ & $6.581$ & $4.504$ & $5.109$ & $6.581$ & $\underline{4.243}$ & $4.793$ \\  
    \bottomrule
  \end{tabular}
\end{table}

\begin{table}
\vspace{-3mm}
  \centering
  \caption{Comparisons with global optimization-based methods for varying-length selection (values at the \textbf{first places} and the \underline{second places}).}\label{tbl:feature_selector}
  \begin{tabular}{@{}ccccccc@{}}
    \toprule
    & \TheName{} & \textbf{HHO} & \textbf{Jaya} & \textbf{SCA} & \textbf{SSA} & \textbf{WOA} \\
    \midrule
    \multicolumn{7}{c}{Misclassification Rate on Alon Dataset}\\ \midrule
    Log. Reg. & $\mathbf{0.160}$ & 0.400 & \underline{0.200} & 0.320 & 0.240 & 0.240 \\
    Rand. Forest & $\mathbf{0.160}$ & 0.320 & \underline{0.200} & 0.280 & $\mathbf{0.160}$ & 0.400 \\
    GBDT & 0.320 & 0.480 & \underline{0.240} & 0.280 & $\mathbf{0.160}$ & 0.320 \\
    SVM-Linear & $\mathbf{0.200}$ & 0.440 & $\mathbf{0.200}$ & $\mathbf{0.200}$ & $\underline{0.280}$ & 0.360 \\
    SVM-Radial & $\mathbf{0.200}$ & \underline{0.240} & 0.320 & 0.400 & 0.320 & 0.560 \\
    SGDClassifer & 0.240 & 0.240 & \underline{0.200} & $\mathbf{0.160}$ & \underline{0.200} & \underline{0.200} \\ 
    \midrule
    \multicolumn{7}{c}{MSE on Communities and Crime Dataset ($\times10^{-3}$)}\\ \midrule
    Lin. Reg. & $\mathbf{3.147 \times 10^{-4}}$ & $6.473 \times 10^{-2}$ & $9.513$ & $\underline{6.342 \times 10^{-3}}$ & $8.156$ & $0.130$\\
    Rand. Forest & $\mathbf{1.733 \times 10^{-30}}$ & $0.811$ & $4.887$ & $0.943$ & $\underline{0.182}$ & $3.591$ \\
    GBDT & $\mathbf{5.420 \times 10^{-13}}$ & $\underline{4.004 \times 10^{-12}}$ & $0.769$ & $0.256$ & $0.120$ & $13.98$ \\
    SVM-Linear & $1.947$ & $1.486$ & $2.263$ & $\mathbf{0.574}$ & $\underline{1.194}$ & $4.618$ \\
    SVM-Radial & $3.700$ & $\underline{1.169}$ & $3.327$ & $1.683$ & $3.170$ & $\mathbf{0.740}$ \\
    Neural Net (2) & $4.808 \times 10^{-2}$ & $\underline{5.676 \times 10^{-3}}$ & $9.548 \times 10^{-2}$ & $\mathbf{5.656 \times 10^{-3}}$ & $5.893 \times 10^{-3}$ & $6.363 \times 10^{-3}$ \\
    Neural Net (3) & $5.676 \times 10^{-3}$ & $\underline{5.357 \times 10^{-3}}$ & $2.601 \times 10^{-2}$ & $5.739 \times 10^{-3}$ & $\mathbf{3.595 \times 10^{-3}}$ & $5.698 \times 10^{-3}$ \\
    \midrule
    \multicolumn{7}{c}{MSE on Superconductivity Dataset ($\times10^{1}$)}\\ \midrule
    Lin. Reg. & $\mathbf{13.594}$ & $41.925$ & $\underline{18.036}$ & $67.446$ & $58.543$ & $53.824$ \\
    Rand. Forest & $13.248$ & $7.429$ & $11.125$ & $\underline{7.418}$ & $\mathbf{7.070}$ & $8.032$ \\
    GBDT & $\underline{8.452}$ & $8.596$ & $\mathbf{7.712}$ & $12.410$ & $9.211$ & $9.030$ \\
    SVM-Linear & $\mathbf{31.634}$ & $65.979$ & $468.9$ & $\underline{33.935}$ & $71.169$ & $71.979$ \\
    SVM-Radial & $\mathbf{36.210}$ & $\underline{62.506}$ & $70.990$ & $73.750$ & $74.102$ & $68.142$ \\
    Neural Net (2) & $62.685$ & $78.623$ & $215.5$ & $\mathbf{57.320}$ & $61.703$ & $\underline{60.868}$ \\
    Neural Net (3) & $\mathbf{51.826}$ & $61.643$ & $61.142$ & $\underline{54.958}$ & $81.451$ & $59.789$ \\
    \midrule
    \multicolumn{7}{c}{MSE on Appliances Energy Prediction Dataset ($\times10^{3}$)}\\ \midrule
    Lin. Reg. & $10.303$ & $\underline{2.331}$ & $\mathbf{2.310}$ & $4.904$ & $4.242$ & $4.590$ \\
    Rand. Forest  & $11.103$ & $8.608$ & $\mathbf{3.939}$ & $6.577$ & $\underline{4.904}$ & $5.392$ \\
    SVM-Linear & $\underline{4.480}$ & $4.909$ & $4.855$ & $5.091$ & $\mathbf{4.479}$ & $4.957$ \\
    SVM-Radial & $\mathbf{4.600}$ & $5.162$ & $4.966$ & $5.166$ & $5.160$ & $\underline{4.910}$ \\
    Neural Net (2) & $\underline{5.202}$ & $5.340$ & $\mathbf{5.191}$ & $5.291$ & $5.224$ & $5.216$ \\
    Neural Net (3) & $\underline{5.231}$ & $\mathbf{5.216}$ & $5.258$ & $5.347$ & $5.291$ & $8.644$ \\
    \bottomrule
  \end{tabular}
\end{table}

\subsubsection{Experiment Results}
In Table~\ref{tbl:stat_selector}, we present the comparison results between \TheName{} and statistical variable selection methods under the settings of fixed-length selection (1--10 selected variables). Due to the page length, we only present 1, 2, 4, and 8 here. Specifically, we assess the misclassification rates of chosen variables through logistic regression model, alongside analyzing the MSE of these variables when employing linear regression for prediction.
The results demonstrate that \TheName{} consistently outperforms the other methods in terms of misclassification rate and MSE when selecting a small number of variables (1, 2, or 4). This superior performance is particularly evident in the Communities and Crime dataset, where \TheName{} achieves significantly lower MSE values compared to the other methods. However, as the number of selected variables increases to 8, the performance of \TheName{} becomes comparable to or slightly worse than some of the other methods, such as Lasso and ElasticNet, in certain datasets like the Alon and Superconductivity datasets. Note that the misclassification rate on Alon dataset could be even higher than 0.5, as the samples are extremely class-imbalanced.

Table~\ref{tbl:feature_selector} compares the performance of \TheName{} against several global optimization-based feature selectors for the four datasets, under varying-length selection settings. The results demonstrate that \TheName{} not only achieves the highest accuracy (measured by misclassification rate or MSE) when selecting features for a specific machine learning algorithm on the majority of datasets but also almost delivers the highest accuracy across all machine learning models for each dataset. We are encouraged to see \TheName{} outperforms the global optimization-based approaches for feature selection, as these methods leveraged cross-validated accuracy as objectives to directly search the subset of features, for predictive modeling.

\subsubsection{Discussions}
Comparing to sparse linear models and global optimization-based feature selectors, the core advantage of \TheName{} (as well as the Knockoff baselines) is its capability in testing the conditional independence between variables and the response for prediction purposes~\cite{watson2021testing}. Apparently, filtering the variables that are conditional independent with the responses is more effective than just estimating the sparsest set of coefficients, resulting in more generalizable models and preventing models learning from noise as signal. Furthermore, such test helps in distinguishing  causation from correlation. This distinction is crucial in scenarios where understanding causal relationships is as important as achieving high prediction accuracy~\cite{yu2020causality}. Finally, \TheName{} outperforms existing Knockoff methods by generating multiple sets of knockoff variables and integrating them into a Ridgeless regression to enhance capacity. It evaluates the significance of each original variable by comparing its coefficient against the distribution of its knockoff counterparts, utilizing a larger reference group of null variables, offering a more precise assessment of impact of original variables in prediction.  

Of-course, our experiments are with some limitations. For example, we have not compare \TheName{} with some computation-intensive neural feature selectors such as sparse autoencoder~\cite{atashgahi2022quick}. Moreover, though \TheName{} provides an estimate of $p$-value, we have not evaluate the feature selection procedure by thresholding $p$-values (such as $\leq 0.05$). Actually, we have done some trials by setting the threshold to $\leq 0.05$ for the real datasets. \TheName{} can deliver a test set MSE of $4.015 \times 10^{-5}$ with the Communities and Crime dataset, and $3.047 \times 10^{2}$ with the Superconductivity dataset, even without cross-validation to determine the number of selected variables.

\section{Discussion and Conclusions}
This work introduces \TheName{} to enhance variable selection, which addresses the model capacity issue in regression models used in the common Model-X Knockoff filter. The proposed approach first generates multiple sets of knockoff variables through \emph{recursive generation}, and then integrates every original variable and its multiple knockoff counterparts into a \emph{Ridgeless regression}. \TheName{} refines coefficient estimates with better capacity, and employs an \emph{anomaly-based significance test} for robust variable selection. \TheName{} can either pickup a predefined number of variables or optimizes the number of selected variables by cross-validation. Extensive experiments demonstrate superior performance compared to existing methods in relevant variable discovery against ground truth variables controlled by simulations and supervised feature selection for classification/regression tasks.

\section*{Declaration}\vspace{-2mm}
\begin{itemize}
\item Funding -  Not applicable

\item Conflicts of interest/Competing interests - Not applicable

\item Ethics approval - No data have been fabricated or manipulated to support your conclusions. No data, text, or theories by others are presented as if they were our own. Data we used, the data processing and inference phases do not contain any user personal information. This work does not have the potential to be used for policing or the military.

\item Consent to participate - Not applicable

\item Consent for publication - Not applicable

\item Availability of data and material - Experiments are based on publicly available open-source datasets.

\item Code availability - All codes will be released after acceptance.

\item Authors' contributions - H.X contributed the original idea. X.Z conducted experiments. X.Z and H.X wrote the manuscript. Y.C involved in discussion and wrote part of the manuscript. X.Z and H.X share equal technical contribution.
\end{itemize} 

\scriptsize{ 
\bibliography{main}}

\end{document}